\def\eqref#1{equation~\ref{#1}}
\def\1{\bm{1}}
\DeclareMathAlphabet{\mathsfit}{\encodingdefault}{\sfdefault}{m}{sl}
\SetMathAlphabet{\mathsfit}{bold}{\encodingdefault}{\sfdefault}{bx}{n}
\DeclareMathOperator{\sign}{sign}
\newtheorem{theorem}{Theorem}
\newtheorem{lemma}[theorem]{Lemma}
\newtheorem{proof}[theorem]{Proof}
\title{Structured Prediction using cGANs with \\ Fusion Discriminator}
\author{Faisal Mahmood, Wenhao Xu, Nicholas J. Durr  \\
Department of Biomedical Engineering\\
Johns Hopkins University\\
Baltimore, MD 21218 USA \\
\texttt{\{faisalm,wenhao1,ndurr\}@jhu.edu} \\
\And
Jeremiah W. Johnson\\
Department of Applied Engineering \& Sciences\\
University of New Hampshire\\
Manchester, NH 03101, USA\\
\texttt{jeremiah.johnson@unh.edu}\\
\And
Alan Yuille \\
Department of Computer Science\\
Johns Hopkins University\\
Baltimore, MD 21218, USA\\
\texttt{ayuille1@jhu.edu}
}
\begin{document}

\maketitle

\begin{abstract}

We propose the fusion discriminator, a single unified framework for incorporating conditional information into a generative adversarial network (GAN) for a variety of distinct structured prediction tasks, including image synthesis, semantic segmentation, and depth estimation. Much like commonly used convolutional neural network - conditional Markov random field (CNN-CRF) models, the proposed method is able to enforce higher-order consistency in the model, but without being limited to a very specific class of potentials. The method is conceptually simple and flexible, and our experimental results demonstrate improvement on several diverse structured prediction tasks.

\end{abstract}

\section{Introduction}

Convolutional neural networks (CNNs) have demonstrated groundbreaking results on a variety of different learning tasks. However, on tasks where high dimensional structure in the data needs to be preserved, per-pixel regression losses typically result in unstructured outputs since they do not take into consideration non-local dependencies in the data. Structured prediction frameworks such as graphical models and joint CNN-graphical model-based architectures \textit{e.g.} CNN-CRFs have been used for imposing spatial contiguity using non-local information ~\citep{lin2016efficient,chen2018deeplab,schwing2015fully,mahmood2018deep}. The motivation to use CNN-CRF models stems from their ability to capture some structured information from second order statistics using the pairwise part. However, statistical interactions beyond the second-order are tedious to incorporate and render the models complicated \citep{arnab2016higher,kohli2009robust}.

Generative models provide another way to represent the structure and spacial contiguity in large high-dimensional datasets with complex dependencies. Implicit generative models specify a stochastic procedure to produce outputs from a probability distribution. Such models are appealing because they do not demand parametrization of the probability distribution they are trying to model. Recently, there has been great interest in CNN-based implicit generative models using autoregressive \citep{chen2018pixelsnail:} and adversarial training frameworks \citep{luc2016semantic}.

Generative adversarial networks (GANs) \citep{goodfellow2014generative} can be seen as a two player \textit{minimax} game where the first player, the generator, is tasked with transforming a random input to a specific distribution such that the second player, the discriminator, can not distinguish between the true and synthesized distributions. The most distinctive feature of adversarial networks is the discriminator that assesses the discrepancy between the current and target distributions. The discriminator acts as a progressively precise critic of an increasingly accurate generator. Despite their structured prediction capabilities, such a training paradigm is often unstable. However, recent work on spectral normalization (SN) and gradient penalty has significantly increased training stability \citep{miyato2018spectral, gulrajani2017improved}. 
Conditional GANs (cGANs) \citep{mirza2014conditional} incorporate conditional image information in the discriminator and have been widely used for class conditioned image generation \citep{miyato2018spectral,miyato2018cgans}. To that effect, unlike in standard GANs, a discriminator for cGANs discriminates between the generated distribution and the target distribution on pairs of samples $y$ and conditional information $x$.

For class conditioning, several unique strategies have been presented to incorporate class information in the discriminator \citep{reed2016generative,miyato2018cgans,odena2016conditional}. 

\begin{wrapfigure}{r}{7cm}
\includegraphics[width=7cm]{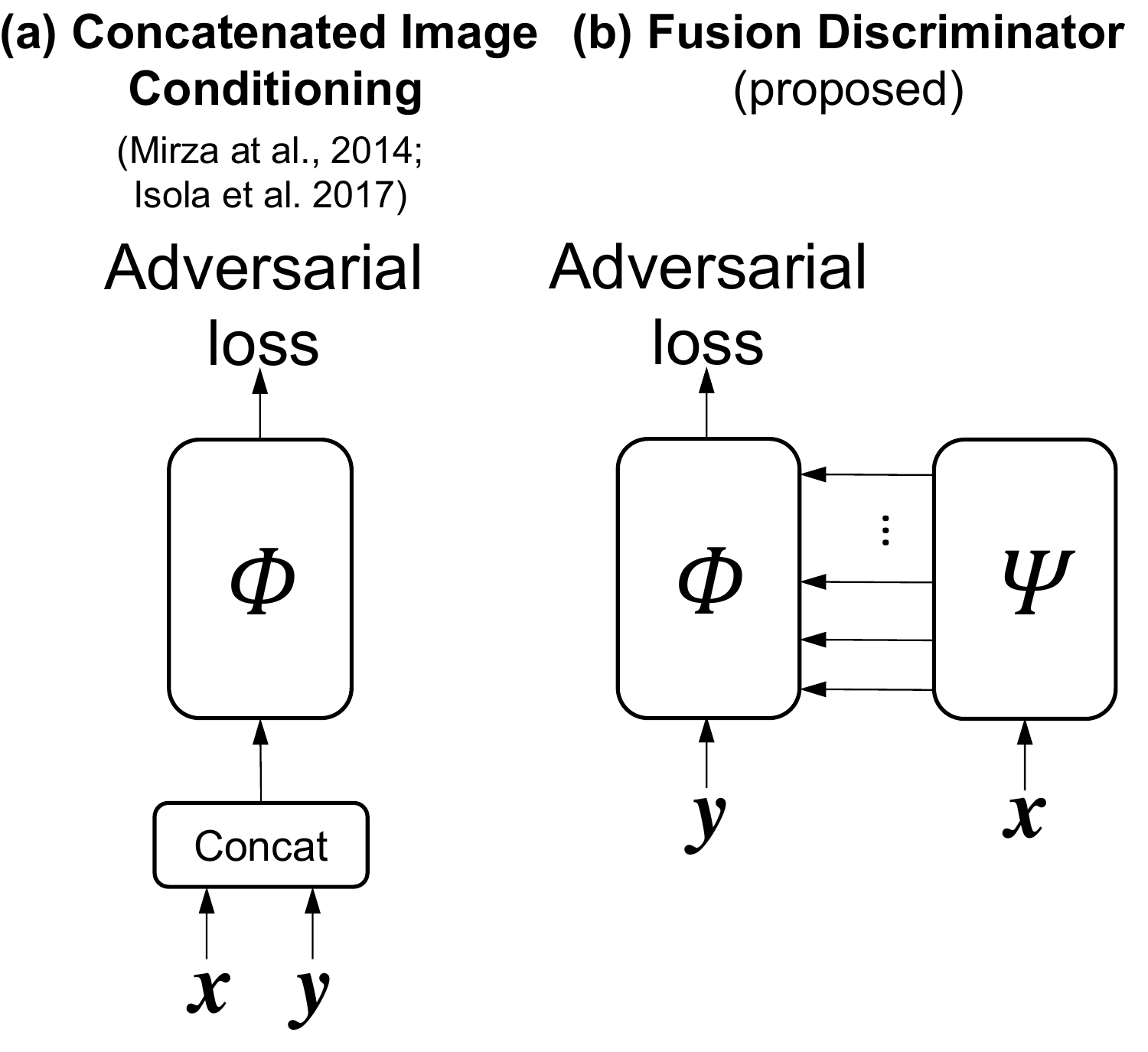}
\caption{Discriminator models for image conditioning. We propose fusing the features of the input and the ground truth or generated image rather than concatenating.}
\label{figure:fusion_op}
\vspace{-2mm}
\end{wrapfigure}

However, a cGAN can also be conditioned by structured data such as an image. Such conditioning is much more useful for structured prediction problems. Since the discriminator in an image conditioned-GAN  has access to large portions of the image the adversarial loss can be interpreted as a learned loss that incorporates higher order statistics, essentially eliminating the need to manually design higher order loss functions.
This variation of cGANs has extensively been used for image-to-image translation tasks \citep{isola2017image,zhu2017unpaired}. However, the best way of incorporating conditional image information into a GAN is not clear, and methods of feeding generated and conditional images to the discriminator tend to use a naive concatenation approach. In this work we address this gap by proposing a discriminator architecture specifically designed for image conditioning. Such a discriminator contributes to the promise of generalization that GANs bring to structured prediction problems by providing a singular and simplistic setup for capturing higher order non-local structural information from higher dimensional data without complicated modeling of energy functions.
 
\textbf{Contributions.} We propose an approach to incorporating conditional information into a cGAN using a fusion discriminator architecture (Fig. 1b). In particular, we make the following key contributions:

\begin{itemize}
	\item[1.] We propose a novel discriminator architecture designed to incorporating conditional information for structured prediction tasks. The method is designed to incorporate conditional information in feature space in a way that allows the discriminator to enforce higher-order consistency in the model, and is conceptually simpler than alternative structured prediction methods such as CNN-CRFs where higher-order potentials have to be manually incorporated in the loss function.
	\item[2.] We demonstrate the effectiveness of this method on a variety of distinct structured prediction tasks including semantic segmentation, depth estimation, and generating real images from semantic masks. Our empirical study demonstrates that the fusion discriminator is effective in preserving high-order statistics and structural information in the data and is flexible enough to be used successfully for many structured prediction tasks.
\end{itemize}

\section{Related Work}

\subsection{CNN-CRF models}\label{subsection:cnn_crf}

Models for structured prediction have been extensively studied in computer vision. In the past these models often entailed the construction of hand-engineered features. In 2015, \citet{long2015fully} demonstrated that a fully convolutional approach to semantic segmentation could yield state-of-the-art results at that time with no need for hand-engineering features. \citet{chen2014semantic} showed that post-processing the results of a CNN with a conditional Markov random field led to significant improvements. Subsequent work by many authors have refined this approach by incorporating the CRF as a layer within a deep network and thereby enabling the parameters of both models to be learnt simultaneously \citep{knobelreiter2017end}. Many researchers have used this approach for other structured prediction problems, including image-to-image translation and depth estimation \citep{liu2015deep,mahmood2018deep,mahmood2018unsupervised}.

In most cases CNN-CRF models only incorporate unary and pairwise potentials. \citet{arnab2016higher} investigated incorporating higher-order potentials into CNN-based models for semantic segmentation, and found that while it is possible to learn the parameters of these potentials, they can be tedious to incorporate and render the model quite complex. Thus there is a need to develop methods that can incorporate higher-order statistical information without requiring manual modeling of higher order potentials.

\subsection{Generative Adversarial Networks}\label{subsection:gan}

 \textbf{Adversarial Training.} Generative adversarial networks were introduced in \citet{goodfellow2014generative}. A GAN consists of a pair of models $(G, D)$, where $G$ attempts to model the distribution of the source domain and $D$ attempts to evaluate the divergence between the generative distribution $q$ and the true distribution $p$. GANs are trained by training the discriminator and the generator in turn, iteratively refining both the quality of the generated data and the discriminator's ability to distinguish between $p$ and $q$. The result is that $D$ and $G$ compete to reach a Nash equilibrium that can be expressed by the training procedure. While GAN training is often unstable and prone to issues such as mode collapse, recent developments such as spectral normalization and gradient penalty have increased GAN training stability \citep{miyato2018spectral,gulrajani2017improved}. Furthermore, GANs have the advantage of being able to access the joint configuration of many variables, thus enabling a GAN to enforce higher-order consistency that is difficult to enforce via other methods \citep{luc2016semantic,isola2017image}.

 \textbf{Conditional GANs.} A conditional GAN (cGAN) is a GAN designed to incorporate conditional information \citep{mirza2014conditional}. cGANs have shown promise for several tasks such as class conditional image synthesis and image-to-image translation \citep{mirza2014conditional,isola2017image}. There are several advantages to using the cGAN model for structured prediction, including the simplicity of the framework. Image conditioned cGANs can be seen as a structured prediction problem tasked with learning a new representation of an input image while making use of non-local dependencies. However, the method by which the conditional information should be incorporated into the model is often unmotivated. Usually, the conditional data is concatenated to some layers in the discriminator (often the input layers). A notable exception to this methodology is the projection cGAN, where the data is assumed to follow certain simple distributions, allowing a hard mathematical rule for incorporating conditional data can be derived from the underlying probabilistic graphical model \citep{miyato2018cgans}. As mentioned in \citet{miyato2018cgans}, the method is less likely to produce good results if the data does not follow one of the prescribed distributions. For structured prediction tasks involving conditioning with image data, this is often not the case.  In the following section we introduce the fusion discriminator and explain the motivation behind it.

\section{Proposed Method: cGANs with Fusion Discriminator}

As mentioned, the most significant part of cGANs for structured prediction is the discriminator. The discriminator has continuous access to pairs of the generated data or real data $y$ and the conditional information (\textit{i.e.} the image) $x$. The cGAN discriminator can then be defined as $\mathcal{D}_{cGAN}(x, y,\theta) := \mathcal{A}(f(x,y,\theta))$, where $\mathcal{A}$ is the activation function, and $f$ is a function of $x$ and $y$ and $\theta$ represents the parameters of $f$. Let $p$ and $q$ designate the true and the generated distributions. The adversarial loss for the discriminator can then be defined as 

\begin{equation}\label{equation:gan_loss_1}
\mathcal{L}(\mathcal{D})=-\mathbb{E}_{q(y)}[\mathbb{E}_{q(x|y)}\log(\mathcal{D}(x, y, \theta)] - \mathbb{E}_{p(y)}[\mathbb{E}_{p(x|y)}[\log(1 - \mathcal{D}(x, G(x), \theta)].
\end{equation}

Here, $\mathcal{A}$ is the sigmoid function, $\mathcal{D}$ is the conditional discriminator, and $G$ is the generator. By design, this frameworks allows the discriminator to significantly effect the generator \citep{goodfellow2014generative}. The most common approach currently in use to incorporate conditional image information into a GAN is to concatenate the conditional image information to the input of the discriminator at some layer, often the first \citep{isola2017image}. Other approaches for conditional information fusion are limited to class conditional fusion where conditional information is often a one-hot vector rather than higher dimensional structured data. Since the discriminator classifies pairs of input and output images, concatenating high-dimensional data may not exploit inherent dependencies in the structure of the data. Fusing the input and output information in an intuitive way such as to preserve the dependencies is instrumental in designing an adversarial framework with high structural capacity. 

We propose the use of a fusion discriminator architecture with two branches. The branches of this discriminator are convolutional neural networks with identical architectures, say $\psi(x)$ and $\phi(y)$, that learn representations from both the conditional data ($\psi(x)$) and the generated or real data $(\phi(y)$) respectively. The learned representations are then fused at various stages (Fig. 2). This architecture is similar to the encoder portion of the FuseNet architecture, which has previously been used to incorporate depth information from RGB-D images for semantic segmentation \citep{hazirbas2017fusenet}. In Figure~\ref{figure:vgg_fuse}, we illustrate a four layer and a VGG16-style fusion discriminator, in which both branches are similar in depth and structure to the VGG16 model \citep{simonyan2014very}. The key ingredient of the fusion discriminator architecture is the fusion block, which combines the learned representations of $x$ and $y$. The fusion layer (red, Fig. \ref{figure:vgg_fuse}) is implemented as element-wise summation and is always inserted after a convolution $\rightarrow$ spectral normalization $\rightarrow$ ReLU instance. The fusion layer modifies the signal passed through the $\psi$ branch by adding in learned representations of $x$ from the $\phi$ branch. This preserves representation from both $x$ and $y$. For structured prediction tasks, $x$ and $y$ will often have learned representations that complement each other; for instance, in tasks like depth estimation, semantic segmentation, and image synthesis, $x$ and $y$ all have highly complimentary features. 

\subsection{Motivation}\label{section:motivation}

\begin{figure}
\begin{center}
\includegraphics[width=1\textwidth]{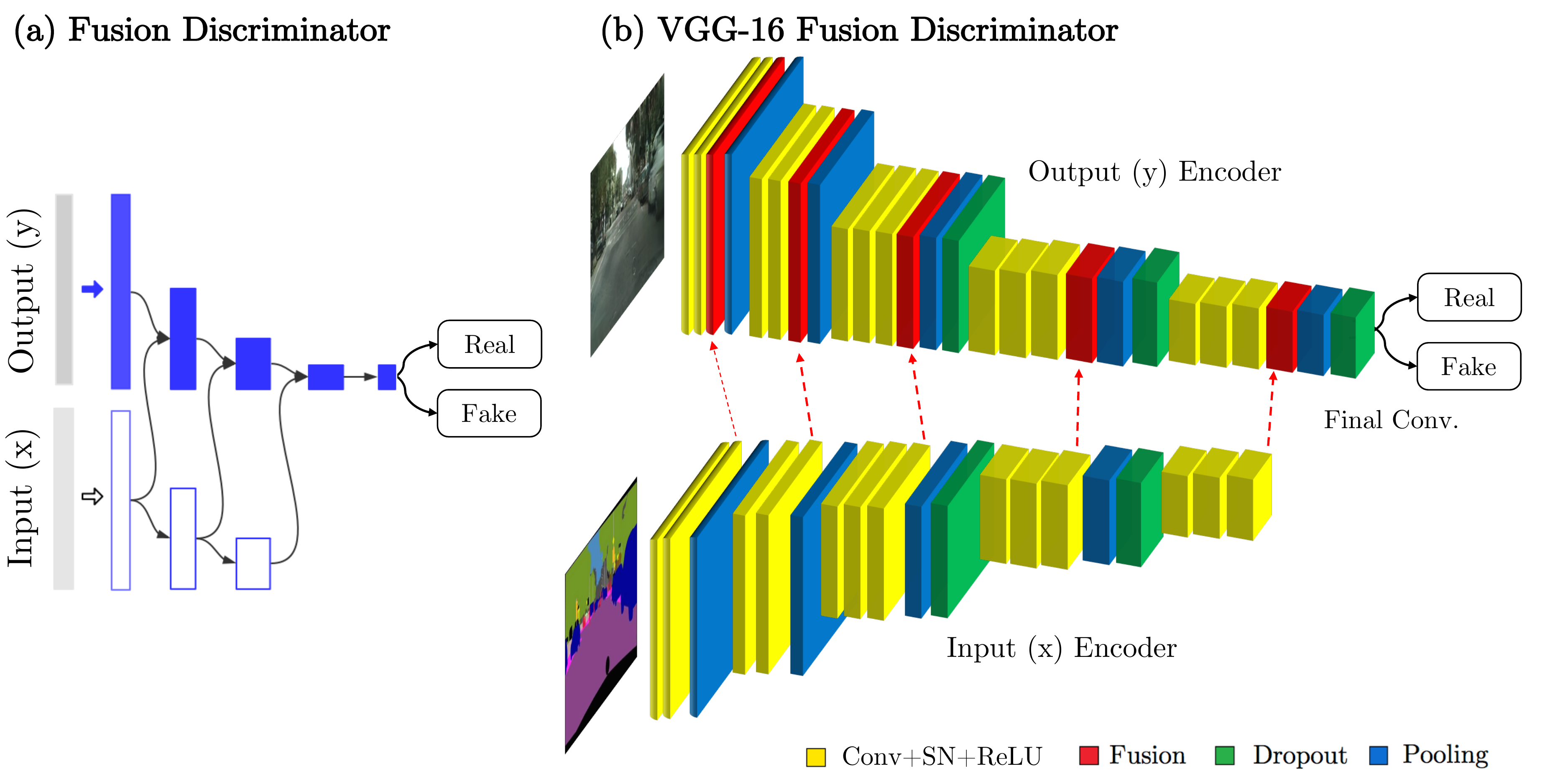}
\end{center}
\caption{Fusion discriminator architecture.}
\label{figure:vgg_fuse}
\end{figure}

\textbf{Theoretical Motivation.} When data is passed through two networks with identical architectures and the activations at corresponding layers are added, the effect is to pass through the combined network (the upper branch in Fig. \ref{figure:vgg_fuse}) a stronger signal than would be passed forward by applying an activation to concatenated data. 

To see this in the case of the ReLU activation function, denote the $k^{th}$ feature map in the $l^{th}$ layer by $\mathbf{h}_k^{(l)}$ and let the weights and biases for this feature and layer be denoted $W_K^{(l)} = [U_k^{(l)}\hspace{0.5em} V_K^{(l)}]^T$ and $b_k^{(l)} = [c_k^{(l)}\hspace{0.5em} d_k^{(l)}]^T$ respectively. Let $\mathbf{h} = \left[\mathbf{x}^T\hspace{0.5em}\mathbf{y}^T\right]^T$, where $\mathbf{x}$ and $\mathbf{y}$ represent the learned features from the conditional and real or generated data respectively. Then

\begin{align}
	\mathbf{h}_k^{(l+1)} &= \max(\mathbf{0}, W_k^{(l)}\mathbf{h} + b_k^{(l)}) \\
                         &= \max(\mathbf{0}, (U_k^{(l)}\mathbf{x}^{(l)} + V_k^{(l)}\mathbf{y}^{(l)} + (c_k^{(l)} + d_k^{(l)}))) \\
                         &\leq \max(\mathbf{0}, (U_k^{(l)}\mathbf{x}^{(l)} + c_k^{(l)})) + \max(\mathbf{0}, (V_k^{(l)}\mathbf{y}^{(l)} + d_k^{(l)})).
\end{align}

Eq. 4 demonstrates that the fusion of the activations in $\psi(x)$ and $\phi(y)$ produces a stronger signal than the activation on concatenated inputs.\footnote{Equations 2--4 apply only for the ReLU, but similar statements can be easily proven for many commonly used activation functions; see Section \ref{section:supplementary} for additional discussion.} Strengthening some activations does not guarantee improved performance in general; however, in the context of structured prediction the fusing operation results in the strongest signals being passed through the discriminator specifically at those places where the model finds useful information simultaneously in both the conditional data and the real or generated data. 

A similar mechanism can be found at at work in many other successful models that require higher order structural information to be preserved; to take one example, consider the neural algorithm of artistic style proposed by \cite{gatys2015neural}. This algorithm successfully transfers highly structured data from an existing image $\mathbf{x}$ onto a randomly initialized image $\mathbf{y}$ by minimizing the content loss function 

\begin{equation}
\mathcal{L}_{content}(\mathbf{x},\mathbf{y}, l) = \frac{1}{2}\sum_{i,j}\left(F^l_{ij} - P^l_{ij}\right)^2 \text{ ,}
\end{equation}

where $F^l_{ij}$ and $P^l_{ij}$ denote the activations at locations $i, j$ in layer $l$ of $\mathbf{x}$ and $\mathbf{y}$ respectively. The loss function mechanism used here differs from the fusing mechanism used in the fusion discriminator, but the underlying principle of capturing high-level structural information from a pair of images by combining signals from common layers in parallel networks is the same. The neural algorithm of artistic style succeeds in content transfer by insuring that the activations containing information of structural importance is similar in both the generated image and the content image. In the case of image-conditioned cGAN training, it can be assumed that the activations of the real or generated data and the conditional data will be similar, and by fusing these activations and passing forward a strengthened signal the network is better able to attend to those locations containing important structural information in both the real or generated data and the conditional data; c.f. Fig. \ref{figure:fuse_viz}.

\textbf{Empirical Motivation.} We use gradient-weighted Class Activation Mapping (Grad-CAM) \citep{selvaraju2017grad} which uses the class-specific gradient information going into the final convolutional layer of a trained CNN to produce a coarse localization map of the important regions in the image. We visualized the outputs of a fusion and concatenated discriminator for several different tasks to observe the structure and strength of the signal being passed forward. We observed that the fusion discriminator architecture always had a visually strong signal at important features for the given task. Representative images from classifying $\mathbf{x}$ and $\mathbf{y}$ pairs as 'real' for two different structured prediction tasks are shown in Fig. \ref{figure:fuse_viz}. This provides visual evidence that a fusion discriminator preserves more structural information from the input and output image pairs and classifies overlapping patches based on that information. Indeed, this is not evidence that a stronger signal will lead to a more accurate classification, but it is a heuristic justification that more representative features from $\mathbf{x}$ and $\mathbf{y}$ will be used to make the determination.

\begin{figure}
\begin{center}
\includegraphics[width=1.1\textwidth]{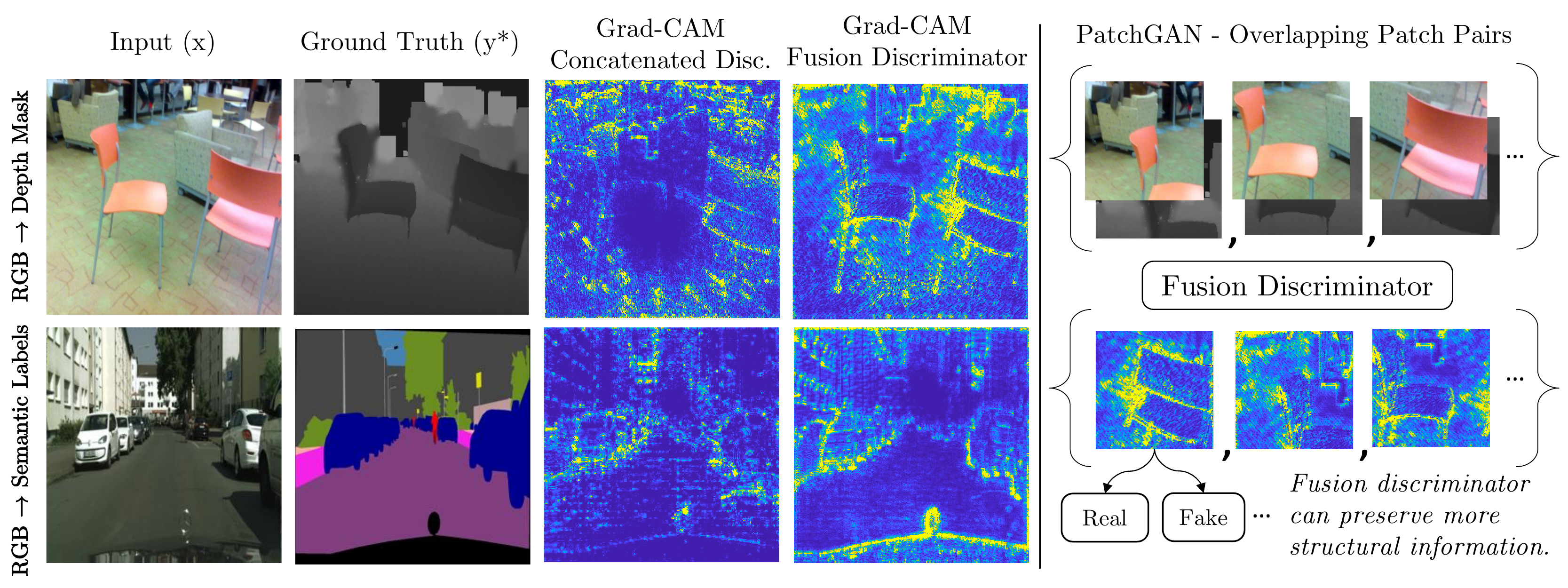}
\end{center}
\caption{Visualizing Discriminator features using gradient-weighted Class Activation Maps (Grad-CAM) to produce a coarse localization map of the important regions in the image. The fusion discriminator passes a stronger and more structured signal on important features in comparison to a concatenated discriminator.}
\label{figure:fuse_viz}
\end{figure}

\section{Experiments}

In order to evaluate the effectiveness of the proposed fusion discriminator we conducted three sets of experiments on structured prediction problems: 1) generating real images from semantic masks (Cityscapes); 2) semantic segmentation (Cityscapes); 3) depth estimation (NYU v2).  For all three tasks we used a U-Net based generator. We applied spectral normalization to all weights of the generator and discriminator to regularize the Lipschitz constant. The Adam optimizer was used for all experiments with hyper-parameters  $\alpha = 0.0002$, $\beta_1 = 0$, $\beta_2 = 0.9$.

\subsection{Image-to-image translation}

\begin{figure}
\begin{center}
\includegraphics[width=1\textwidth]{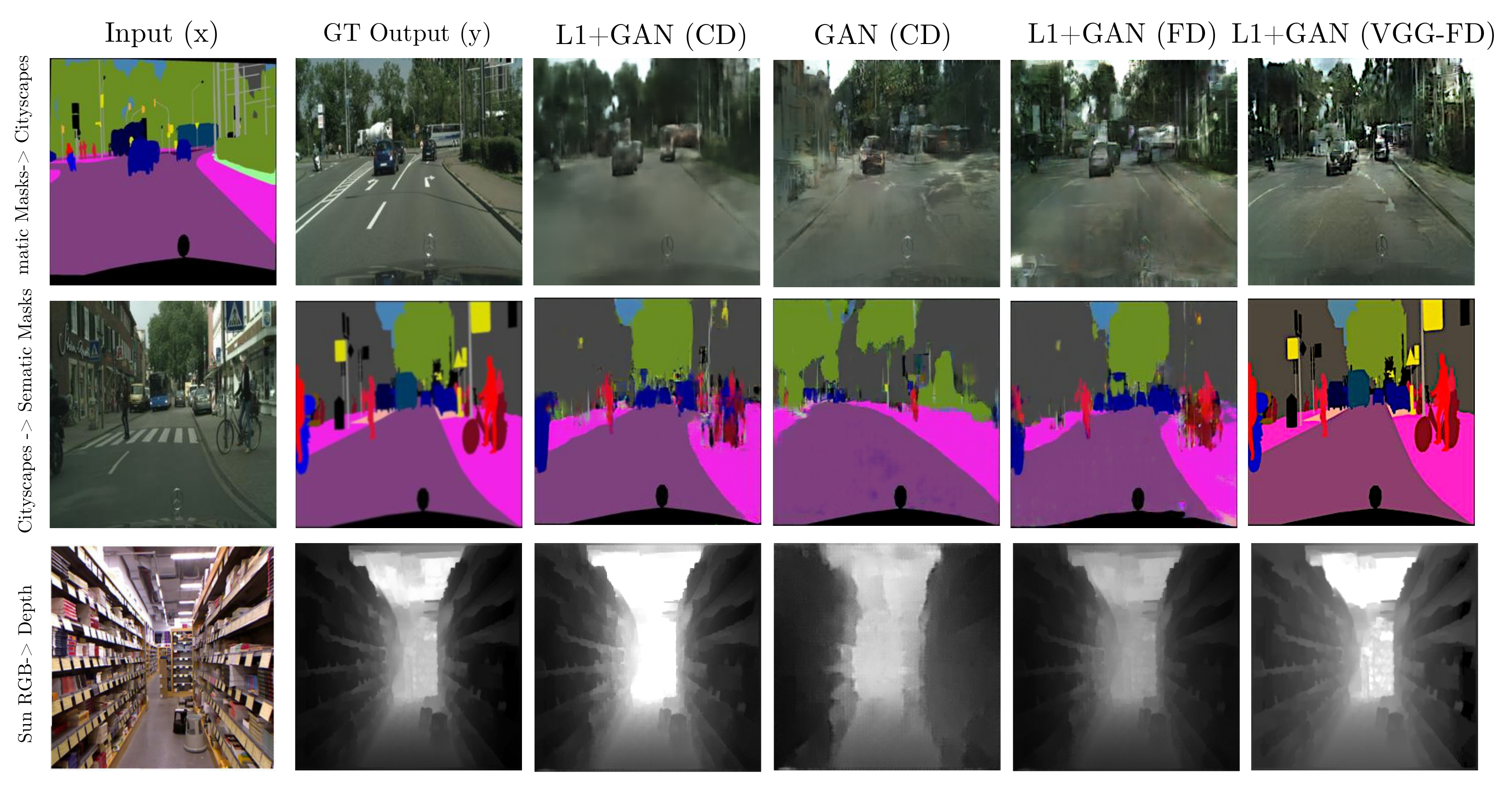}
\end{center}
\caption{Comparative analysis of concatenation and fusion discriminators on three different structured prediction tasks, a) Semantic masks to real image transformation b) Semantic segmentation c) Depth Estimation. The fusion discriminator preserves more structural details.}
\label{figure:comparative}
\end{figure}

In order to demonstrate the structure preserving abilities of our discriminator we use the proposed setup in the image-to-image translation setting. We focus on the application of generating realistic images from semantic labels. This application has recently been studied for generating realistic synthetic data for self driving cars \citep{wang2018high,chen2017photographic}. Unlike recent approaches where the objective is to generate increasingly realistic high definition (HD) images, the purpose of this experiment is to explore if a generic fusion discriminator can outperform a concatenated discriminator when using a simple generator. We used 2,975 training images from the Cityscapes dataset \citep{cordts2016cityscapes} and re-scaled them to $256\times 256$ for computational efficiency. The provided Cityscapes test set with 500 images was used for testing. Our ablation study focused on changing the discriminator between a standard $4$-layer concatenation discriminator used in the seminal image-to-image translation work \citep{isola2017image}, a combination of this 4-layer discriminator with spectral normalization (SN) \citep{miyato2018spectral}, a VGG-16 concatenation discriminator and the proposed 4-layer and VGG-16 fusion discriminators. 

\begin{figure}
\begin{center}
\includegraphics[width=1\textwidth]{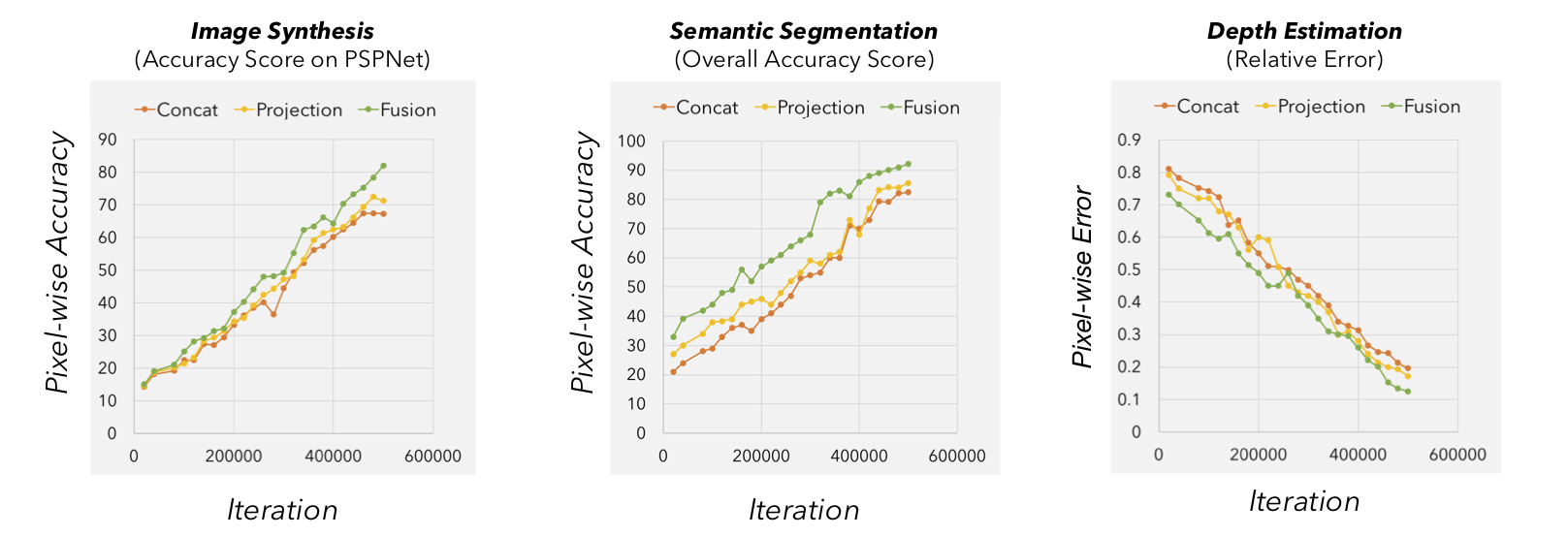}
\end{center}
\caption{A comparative analysis of concatenation, projection and fusion discriminators on three different structured prediction tasks, \textit{i.e.}, image synthesis, semantic segmentation, and depth estimation.}
\label{figure:comparative_2}
\end{figure}

\subsubsection{Evaluation}
Since standard GAN evaluation metrics such as inception score and FID can not directly be applied to image-to-image translation tasks we use an evaluation technique previously used for such image synthesis \cite{isola2017image,wang2017highres}. To quantitatively evaluate the effectiveness of our proposed discriminator architecture we perform semantic segmentation on synthesized images and compare the similarity between the predicted segments and the input. The intuition behind this kind of experimentation is that if the generated images corresponds to the input label map an existing semantic segmentation model such as a PSPNet \citep{zhao2017pyramid} should be able to predict the input segmentation mask. Similar experimentation has been suggested in \cite{isola2017image} and \cite{wang2017highres}. Table 1 reports segmentation both pixel-wise accuracy and overall intersection-over-union (IoU). The proposed fusion discriminator outperforms the concatenated discriminator by a large margin. Our result is closer to the theoretical upper bound achieved by real images. This confirms that the fusion discriminator contributes to structure preservation in the output image. The fusion discriminator could be used with high definition images, however, such analysis is beyond the scope of the current study. Representative images for this task are shown in Fig. \ref{figure:comparative}. The projection discriminator was modified image conditioning according to the explanation given in \cite{miyato2018cgans} for the super-resolution task. Fig. \ref{figure:comparative_2} shows a comparative analysis of the concatenation, projection and fusion discriminators in an ablation study upto $550k$ iterations.

\begin{table}[t]
\caption{PSPNet-based semantic segmentation IoU and accuracy scores using generated images from different discriminators. Our results outperform concatenation-based methods by a large margin and is close to the accuracy and IoU on actual images (GT/Oracle).}
\label{table:psp_scores}
\begin{center}
\begin{tabular}{lll}
\hline
\multicolumn{1}{c}{Discriminator}  &\multicolumn{1}{c}{\bf Mean IoU} &\multicolumn{1}{c}{\bf Pixel Accuracy}
\\ \hline 
4-Layer Concat. (\cite{isola2017image})       			& 0.3617 &74.34\%                  \\
4-Layer Concat. + SN       						 		& 0.4022 & 76.49\%                 \\
4-Layer Fusion 	+ SN            			        	& 0.4569 & 79.23\%                 \\
VGG16 Concat. 	+ SN         						    & 0.4125 & 77.62\%                 \\
Projection + SN (\cite{miyato2018cgans})				& 0.4696 & 79.11\%                 \\
\hline
\textbf{VGG16 Fusion + SN}								&\textbf{0.5483} &\textbf{83.07\%} \\
\hline
\hline
GT / Oracle										        & 0.5937 &85.13\%                  \\
\hline
\end{tabular}
\end{center}
\end{table}

\subsection{Semantic Segmentation}

Semantic segmentation is vital for visual scene understanding and is often formulated as a dense labeling problem where the objective is to predict the category label for each individual pixel. Semantic segmentation is a classical structured prediction problem and CNNs with pixel-wise loss often fail to make accurate predictions \citep{luc2016semantic}. Much better results have been achieved by incorporating higher order statistics in the image using CRFs as a post-processing step or jointly training them with CNNs \citep{chen2018deeplab}. It has been shown that incorporating higher order potentials continues to improve semantic segmentation improvement, making this an ideal task for evaluating the structured prediction capabilities of GANs and their enhancement using our proposed discriminator.

Here, we empirically validate that the adversarial framework with the fusion discriminator can preserve more spacial context in comparison to CNN-CRF setups. We demonstrate that our proposed fusion discriminator is equipped with the ability to preserve higher order details. For comparative analysis we compare with relatively shallow and deep architectures for both concatenation and fusion discriminators. We also conduct an ablation study to analyze the effect of spectral normalization. The generator for all semantic segmentation experiments was a U-Net. For the experiment without spectral normalization, we trained each model for 950k iterations, which was sufficient for the training of the concatenated discriminator to stabilize. For all other experiments, we trained for 800k iterations. The discriminator was trained twice as much as the generator. 

\begin{table}[t]
\caption{GAN-based semantic segmentation using different discriminators. Tested with cityscapes dataset rescaled to $256\times 256$ images.}
\label{table:segmentation_scores}
\begin{center}
\begin{tabular}{lll}
\hline
\multicolumn{1}{c}{Discriminator}  							&\multicolumn{1}{c}{\bf Mean IoU} &\multicolumn{1}{c}{\bf Pixel Accuracy}
\\ \hline 
4-Layer Concat. (\cite{isola2017image})       				 & 0.2925 & 81.41\%\\
4-Layer Concat. + SN       						 			 & 0.3162 & 83.49\%\\
4-Layer Fusion 	+ SN            			        	 	 & 0.4471 & 85.23\%\\
VGG16 Concat. 	+ SN         						 		 & 0.4066 & 84.62\%\\
Projection + SN (\cite{miyato2018cgans})					& 0.4687 & 85.97\%\\
\hline
\textbf{VGG16 Fusion + SN}								    &\textbf{0.6642} &\textbf{92.17\%}\\
\hline
\hline
CNN-CRF Postprocess 										  & 0.5425 &87.41\%\\
CNN-CRF Joint Training 										  & 0.6042 &90.25\%\\
\hline
\end{tabular}
\end{center}
\end{table}

\subsection{Depth Estimation}

Depth estimation is another structured prediction task that has been extensively studied because of its wide spread applications in computer vision. As with semantic segmentation, both per-pixel losses and non-local losses such as CNN-CRFs have been widely used for depth estimation. State-of-the art with depth estimation has been achieved using a hierarchical chain of non-local losses. We argue that it is possible to incorporate higher order information using a simple adversarial loss with a fusion discriminator.

In order to validate our claims we conducted a series of experiments with different discriminators, similar to the series of experiments conducted for semantic segmentation. We used the \textit{Eigen} test-train split for the NYU v2 \cite{silberman2012indoor} dataset containing 1449 images for training and 464 images for testing. We observed that as with image synthesis and semantic segmentation the fusion discriminator outperforms concatenation-based methods and pairwise CNN-CRF methods every time.

\begin{table}[t]
\caption{Depth Estimation results on NYU v2 dataset using various discriminators.}
\label{table:depth_scores}
\begin{center}
\begin{tabular}{llll}
\hline
\multicolumn{1}{c}{Discriminator}  &\multicolumn{1}{c}{\bf relative error} &\multicolumn{1}{c}{\bf rms} &\multicolumn{1}{c}{\bf{$\bf{\log}_{10}$}}
\\ \hline 
4-Layer Concat. (\cite{isola2017image})       			  & 0.1963 &0.784 &0.087\\
4-Layer Concat. + SN       						 		  & 0.1442 & 0.592 &0.059\\
4-Layer Fusion 	+ SN            			        	  & 0.1315 & 0.583&0.057\\
VGG16 Concat. 	+ SN         						 	  & 0.1374 & 0.547 &0.054\\
Projection + SN (\cite{miyato2018cgans})				  & 0.1417 & 0.573 & 0.059\\
\hline
\textbf{VGG16 Fusion + SN}								  &\textbf{0.1254} &\textbf{0.491} & \textbf{0.052}\\
\hline
\hline
CNN-CRF Postprocess									  		  & 0.311 & 1.025 & 0.129\\
CNN-CRF Joint Training									  	  & 0.232 & 0.824 & 0.094\\
\hline
\end{tabular}
\end{center}
\end{table}


\vspace{-4mm}
\section{Conclusions}

Structured prediction problems can be posed as image conditioned GAN problems. The discriminator plays a crucial role in incorporating non-local information in adversarial training setups for structured prediction problems. Image conditioned GANs usually feed concatenated input and output pairs to the discriminator.  In this research, we proposed a model for the discriminator of cGANs that involves fusing features from both the input and the output image in feature space. This method provides the discriminator a hierarchy of features at different scales from the conditional data, and thereby allows the discriminator to capture higher-order statistics from the data. We qualitatively demonstrate and empirically validate that this simple modification can significantly improve the general adversarial framework for structured prediction tasks. The results presented in this paper strongly suggest that the mechanism of feeding paired information into the discriminator in image conditioned GAN problems is of paramount importance.

\bibliography{iclr2019_conference}

\begin{thebibliography}{32}
\providecommand{\natexlab}[1]{#1}
\providecommand{\url}[1]{\texttt{#1}}
\expandafter\ifx\csname urlstyle\endcsname\relax
  \providecommand{\doi}[1]{doi: #1}\else
  \providecommand{\doi}{doi: \begingroup \urlstyle{rm}\Url}\fi

\bibitem[Arnab et~al.(2016)Arnab, Jayasumana, Zheng, and Torr]{arnab2016higher}
Anurag Arnab, Sadeep Jayasumana, Shuai Zheng, and Philip~HS Torr.
\newblock Higher order conditional random fields in deep neural networks.
\newblock In \emph{European Conference on Computer Vision}, pp.\  524--540.
  Springer, 2016.

\bibitem[Chen et~al.(2014)Chen, Papandreou, Kokkinos, Murphy, and
  Yuille]{chen2014semantic}
Liang-Chieh Chen, George Papandreou, Iasonas Kokkinos, Kevin Murphy, and Alan~L
  Yuille.
\newblock Semantic image segmentation with deep convolutional nets and fully
  connected crfs.
\newblock \emph{arXiv preprint arXiv:1412.7062}, 2014.

\bibitem[Chen et~al.(2018{\natexlab{a}})Chen, Papandreou, Kokkinos, Murphy, and
  Yuille]{chen2018deeplab}
Liang-Chieh Chen, George Papandreou, Iasonas Kokkinos, Kevin Murphy, and Alan~L
  Yuille.
\newblock Deeplab: Semantic image segmentation with deep convolutional nets,
  atrous convolution, and fully connected crfs.
\newblock \emph{IEEE transactions on pattern analysis and machine
  intelligence}, 40\penalty0 (4):\penalty0 834--848, 2018{\natexlab{a}}.

\bibitem[Chen \& Koltun(2017)Chen and Koltun]{chen2017photographic}
Qifeng Chen and Vladlen Koltun.
\newblock Photographic image synthesis with cascaded refinement networks.
\newblock In \emph{Proceedings of the IEEE International Conference on Computer
  Vision (ICCV)}, pp.\  1511--1520, 2017.

\bibitem[Chen et~al.(2018{\natexlab{b}})Chen, Mishra, Rohaninejad, and
  Abbeel]{chen2018pixelsnail:}
Xi~Chen, Nikhil Mishra, Mostafa Rohaninejad, and Pieter Abbeel.
\newblock Pixelsnail: An improved autoregressive generative model,
  2018{\natexlab{b}}.
\newblock URL \url{https://openreview.net/forum?id=Sk7gI0CUG}.

\bibitem[Cordts et~al.(2016)Cordts, Omran, Ramos, Rehfeld, Enzweiler, Benenson,
  Franke, Roth, and Schiele]{cordts2016cityscapes}
Marius Cordts, Mohamed Omran, Sebastian Ramos, Timo Rehfeld, Markus Enzweiler,
  Rodrigo Benenson, Uwe Franke, Stefan Roth, and Bernt Schiele.
\newblock The cityscapes dataset for semantic urban scene understanding.
\newblock In \emph{Proceedings of the IEEE conference on computer vision and
  pattern recognition}, pp.\  3213--3223, 2016.

\bibitem[Gatys et~al.(2015)Gatys, Ecker, and Bethge]{gatys2015neural}
Leon~A Gatys, Alexander~S Ecker, and Matthias Bethge.
\newblock A neural algorithm of artistic style.
\newblock \emph{arXiv preprint arXiv:1508.06576}, 2015.

\bibitem[Goodfellow et~al.(2014)Goodfellow, Pouget-Abadie, Mirza, Xu,
  Warde-Farley, Ozair, Courville, and Bengio]{goodfellow2014generative}
Ian Goodfellow, Jean Pouget-Abadie, Mehdi Mirza, Bing Xu, David Warde-Farley,
  Sherjil Ozair, Aaron Courville, and Yoshua Bengio.
\newblock Generative adversarial nets.
\newblock In \emph{Advances in neural information processing systems}, pp.\
  2672--2680, 2014.

\bibitem[Gulrajani et~al.(2017)Gulrajani, Ahmed, Arjovsky, Dumoulin, and
  Courville]{gulrajani2017improved}
Ishaan Gulrajani, Faruk Ahmed, Martin Arjovsky, Vincent Dumoulin, and Aaron~C
  Courville.
\newblock Improved training of wasserstein gans.
\newblock In \emph{Advances in Neural Information Processing Systems}, pp.\
  5767--5777, 2017.

\bibitem[Hazirbas et~al.(2017)Hazirbas, Ma, Domokos, and
  Cremers]{hazirbas2017fusenet}
Caner Hazirbas, Lingni Ma, Csaba Domokos, and Daniel Cremers.
\newblock Fusenet: Incorporating depth into semantic segmentation via
  fusion-based cnn architecture.
\newblock In Shang-Hong Lai, Vincent Lepetit, Ko~Nishino, and Yoichi Sato
  (eds.), \emph{Computer Vision -- ACCV 2016}, pp.\  213--228, Cham, 2017.
  Springer International Publishing.
\newblock ISBN 978-3-319-54181-5.

\bibitem[Isola et~al.(2017)Isola, Zhu, Zhou, and Efros]{isola2017image}
Phillip Isola, Jun-Yan Zhu, Tinghui Zhou, and Alexei~A Efros.
\newblock Image-to-image translation with conditional adversarial networks.
\newblock \emph{arXiv preprint}, 2017.

\bibitem[Kn{\"o}belreiter et~al.(2017)Kn{\"o}belreiter, Reinbacher,
  Shekhovtsov, and Pock]{knobelreiter2017end}
Patrick Kn{\"o}belreiter, Christian Reinbacher, Alexander Shekhovtsov, and
  Thomas Pock.
\newblock End-to-end training of hybrid cnn-crf models for stereo.
\newblock In \emph{Conference on Computer Vision and Pattern
  Recognition,(CVPR)}, 2017.

\bibitem[Kohli et~al.(2009)Kohli, Torr, et~al.]{kohli2009robust}
Pushmeet Kohli, Philip~HS Torr, et~al.
\newblock Robust higher order potentials for enforcing label consistency.
\newblock \emph{International Journal of Computer Vision}, 82\penalty0
  (3):\penalty0 302--324, 2009.

\bibitem[Lin et~al.(2016)Lin, Shen, Van Den~Hengel, and Reid]{lin2016efficient}
Guosheng Lin, Chunhua Shen, Anton Van Den~Hengel, and Ian Reid.
\newblock Efficient piecewise training of deep structured models for semantic
  segmentation.
\newblock In \emph{Proceedings of the IEEE Conference on Computer Vision and
  Pattern Recognition}, pp.\  3194--3203, 2016.

\bibitem[Liu et~al.(2015)Liu, Shen, and Lin]{liu2015deep}
Fayao Liu, Chunhua Shen, and Guosheng Lin.
\newblock Deep convolutional neural fields for depth estimation from a single
  image.
\newblock In \emph{Proceedings of the IEEE Conference on Computer Vision and
  Pattern Recognition}, pp.\  5162--5170, 2015.

\bibitem[Long et~al.(2015)Long, Shelhamer, and Darrell]{long2015fully}
Jonathan Long, Evan Shelhamer, and Trevor Darrell.
\newblock Fully convolutional networks for semantic segmentation.
\newblock In \emph{Proceedings of the IEEE conference on computer vision and
  pattern recognition}, pp.\  3431--3440, 2015.

\bibitem[Luc et~al.(2016)Luc, Couprie, Chintala, and Verbeek]{luc2016semantic}
Pauline Luc, Camille Couprie, Soumith Chintala, and Jakob Verbeek.
\newblock Semantic segmentation using adversarial networks.
\newblock \emph{arXiv preprint arXiv:1611.08408}, 2016.

\bibitem[Mahmood \& Durr(2018)Mahmood and Durr]{mahmood2018deep}
Faisal Mahmood and Nicholas~J Durr.
\newblock Deep learning and conditional random fields-based depth estimation
  and topographical reconstruction from conventional endoscopy.
\newblock \emph{Medical Image Analysis}, 2018.

\bibitem[Mahmood et~al.(2018)Mahmood, Chen, and Durr]{mahmood2018unsupervised}
Faisal Mahmood, Richard Chen, and Nicholas~J Durr.
\newblock Unsupervised reverse domain adaptation for synthetic medical images
  via adversarial training.
\newblock \emph{IEEE Transactions on Medical Imaging}, 2018.

\bibitem[Mirza \& Osindero(2014)Mirza and Osindero]{mirza2014conditional}
Mehdi Mirza and Simon Osindero.
\newblock Conditional generative adversarial nets.
\newblock \emph{arXiv preprint arXiv:1411.1784}, 2014.

\bibitem[Miyato \& Koyama(2018)Miyato and Koyama]{miyato2018cgans}
Takeru Miyato and Masanori Koyama.
\newblock c{GAN}s with projection discriminator.
\newblock In \emph{International Conference on Learning Representations}, 2018.
\newblock URL \url{https://openreview.net/forum?id=ByS1VpgRZ}.

\bibitem[Miyato et~al.(2018)Miyato, Kataoka, Koyama, and
  Yoshida]{miyato2018spectral}
Takeru Miyato, Toshiki Kataoka, Masanori Koyama, and Yuichi Yoshida.
\newblock Spectral normalization for generative adversarial networks.
\newblock In \emph{International Conference on Learning Representations}, 2018.
\newblock URL \url{https://openreview.net/forum?id=B1QRgziT-}.

\bibitem[Nathan~Silberman \& Fergus(2012)Nathan~Silberman and
  Fergus]{silberman2012indoor}
Pushmeet~Kohli Nathan~Silberman, Derek~Hoiem and Rob Fergus.
\newblock Indoor segmentation and support inference from rgbd images.
\newblock In \emph{ECCV}, 2012.

\bibitem[Odena et~al.(2016)Odena, Olah, and Shlens]{odena2016conditional}
Augustus Odena, Christopher Olah, and Jonathon Shlens.
\newblock Conditional image synthesis with auxiliary classifier gans.
\newblock \emph{arXiv preprint arXiv:1610.09585}, 2016.

\bibitem[Reed et~al.(2016)Reed, Akata, Yan, Logeswaran, Schiele, and
  Lee]{reed2016generative}
Scott Reed, Zeynep Akata, Xinchen Yan, Lajanugen Logeswaran, Bernt Schiele, and
  Honglak Lee.
\newblock Generative adversarial text to image synthesis.
\newblock In \emph{33rd International Conference on Machine Learning}, pp.\
  1060--1069, 2016.

\bibitem[Schwing \& Urtasun(2015)Schwing and Urtasun]{schwing2015fully}
Alexander~G Schwing and Raquel Urtasun.
\newblock Fully connected deep structured networks.
\newblock \emph{arXiv preprint arXiv:1503.02351}, 2015.

\bibitem[Selvaraju et~al.(2017)Selvaraju, Cogswell, Das, Vedantam, Parikh,
  Batra, et~al.]{selvaraju2017grad}
Ramprasaath~R Selvaraju, Michael Cogswell, Abhishek Das, Ramakrishna Vedantam,
  Devi Parikh, Dhruv Batra, et~al.
\newblock Grad-cam: Visual explanations from deep networks via gradient-based
  localization.
\newblock In \emph{ICCV}, pp.\  618--626, 2017.

\bibitem[Simonyan \& Zisserman(2014)Simonyan and Zisserman]{simonyan2014very}
Karen Simonyan and Andrew Zisserman.
\newblock Very deep convolutional networks for large-scale image recognition.
\newblock \emph{arXiv preprint arXiv:1409.1556}, 2014.

\bibitem[Wang et~al.(2017)Wang, Liu, Zhu, Tao, Kautz, and
  Catanzaro]{wang2017highres}
Ting-Chun Wang, Ming-Yu Liu, Jun-Yan Zhu, Andrew Tao, Jan Kautz, and Bryan
  Catanzaro.
\newblock High-resolution image synthesis and semantic manipulation with
  conditional gans.
\newblock \emph{arXiv preprint arXiv:1711.11585}, 2017.

\bibitem[Wang et~al.(2018)Wang, Liu, Zhu, Tao, Kautz, and
  Catanzaro]{wang2018high}
Ting-Chun Wang, Ming-Yu Liu, Jun-Yan Zhu, Andrew Tao, Jan Kautz, and Bryan
  Catanzaro.
\newblock High-resolution image synthesis and semantic manipulation with
  conditional gans.
\newblock In \emph{Proceedings of the IEEE Conference on Computer Vision and
  Pattern Recognition (CVPR)}, pp.\  8798--8807, 2018.

\bibitem[Zhao et~al.(2017)Zhao, Shi, Qi, Wang, and Jia]{zhao2017pyramid}
Hengshuang Zhao, Jianping Shi, Xiaojuan Qi, Xiaogang Wang, and Jiaya Jia.
\newblock Pyramid scene parsing network.
\newblock In \emph{IEEE Conf. on Computer Vision and Pattern Recognition
  (CVPR)}, pp.\  2881--2890, 2017.

\bibitem[Zhu et~al.(2017)Zhu, Park, Isola, and Efros]{zhu2017unpaired}
Jun-Yan Zhu, Taesung Park, Phillip Isola, and Alexei~A Efros.
\newblock Unpaired image-to-image translation using cycle-consistent
  adversarial networks.
\newblock \emph{arXiv preprint}, 2017.

\end{thebibliography}
\bibliographystyle{iclr2019_conference}

\newpage

\section{Supplementary Material}\label{section:supplementary}

\subsection{cGAN Objective}
The objective function for a conditional GANs can be defined as,
\begin{equation}\label{equation:gan_loss}
\mathcal{L}_{cGAN}(G, D) = \mathbb{E}_{x, y}[\log(D(x, y)] + \mathbb{E}_{x, z}[\log(1 - D(x, G(x))].
\end{equation}
The generator $G$ tries to minimize the loss expressed by equation \ref{equation:gan_loss} while the discriminator $D$ tries to maximize it. In addition, we impose an $L1$ reconstruction loss:
\begin{equation}\label{equation:reconstruction_loss}
\mathcal{L}_{L1}(G) = \mathbb{E}_{x, y}[||y - G(x)||_{1}],
\end{equation}
leading to the objective,
\begin{equation}\label{equation:final_objective}
G^* = \arg \underset{G}\min\,\underset{D}\max\,\mathcal{L}_{cGAN}(G, D) + \lambda \mathcal{L}_{L1}(G).
\end{equation}

\subsection{Generator Architecture}

We adapt our network architectures from those explained in \citep{isola2017image}. Let CSRk  denote a Convolution-Spectral Norm -ReLU layer with $k$ filters. Let CSRDk donate a similar layer with dropout with a rate of 0.5. All convolutions chosen are $4\times 4$ spatial filters applied with a stride 2, and in decoders they are up-sampled by 2. All networks were trained from scratch and weights were initialized from a Gaussian distribution of mean 0 and standard deviation of 0.02. All images were cropped and rescaled to $256 \times 256$, were up sampled to $268 \times 286$ and then randomly cropped back to $256\times 256$ to incorporate random jitter in the model. 

\textbf{Encoder}: CSR64$\rightarrow$CSR128$\rightarrow$CSR256$\rightarrow$CSR512$\rightarrow$CSR512$\rightarrow$CSR512$\rightarrow$CSR512$\rightarrow$CSR512 \\
\textbf{Decoder}: CSRD512$\rightarrow$CSRD1024$\rightarrow$CSRD1024$\rightarrow$CSR1024$\rightarrow$CSR1024$\rightarrow$CSR512$\rightarrow$CSR256 \\$\rightarrow$CSR128\\

The last layer in the decoder is followed by a convolution to map the number of output channels (3 in the case of image synthesis and semantic labels and 1 in the case of depth estimation). This is followed by a \textit{Tanh} function. Leaky ReLUs were used throughout the encoder with a slope of 0.2, regular ReLUs were used in the decoder. Skip connections are placed between each layer $l$ in the encoder and layer $l$−$n$ in the decoder assuming $l$ is the maximum number of layers. The skip connections concatenate activations from the $l^{th}$ layer to layer $(l-n)^{th}$ later. 

\subsection{Activations with Negative Branches}
Equations 2--4 of section \ref{section:motivation} illustrate that when the ReLU activation is used in a fusion block, the fusing operation results in a positive signal at least as large as that obtained by concatenation. For activations with negative branches, the following similar claim holds.

\begin{lemma} Denote the $k^{th}$ feature map in the $l^{th}$ layer by $\mathbf{h}_k^{(l)}$, and let the weights and biases for this feature and layer be denoted $W_K^{(l)} = [U_k^{(l)}\hspace{0.5em} V_K^{(l)}]^T$ and $b_k^{(l)} = [c_k^{(l)}\hspace{0.5em} d_k^{(l)}]^T$ respectively. Let $\mathbf{h} = \left[\mathbf{x}^T\hspace{0.5em}\mathbf{y}^T\right]^T$, where $\mathbf{x}$ and $\mathbf{y}$ represent the learned features from the conditional and real or generated data respectively. Let $\sigma$ represent an activation function. If $\sign(\sigma(U_k^{(l)}\mathbf{x}^{(l)} + c_k^{(l)})) = \sign(\sigma(V_k^{(l)}\mathbf{y}^{(l)} + d_k^{(l)}))$, then
\begin{equation}
    |\sigma(W_k^{(l)}\mathbf{h} + b_k^{(l)})| \leq |\sigma(U_k^{(l)}\mathbf{x}^{(l)} + c_k^{(l)})| + |\sigma(V_k^{(l)}\mathbf{y}^{(l)} +  d_k^{(l)})|.
\end{equation}
\end{lemma}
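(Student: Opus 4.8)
The plan is to strip the absolute values from the right-hand side using the sign hypothesis, and then reduce the claim to a subadditivity property of the activation. Write $u := U_k^{(l)}\mathbf{x}^{(l)} + c_k^{(l)}$ and $v := V_k^{(l)}\mathbf{y}^{(l)} + d_k^{(l)}$, so that, exactly as in Equations 2--3, the argument of the fused activation is $W_k^{(l)}\mathbf{h} + b_k^{(l)} = u + v$. The target inequality then reads $|\sigma(u+v)| \leq |\sigma(u)| + |\sigma(v)|$ under the hypothesis $\sign(\sigma(u)) = \sign(\sigma(v))$.

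First I would observe that the sign hypothesis forces $|\sigma(u)| + |\sigma(v)| = |\sigma(u) + \sigma(v)|$, since two quantities of equal sign add with no cancellation. Hence it suffices to prove the cleaner statement $|\sigma(u+v)| \leq |\sigma(u) + \sigma(v)|$: feeding the summed pre-activation through $\sigma$ never produces a larger magnitude than summing the two separate activations.

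Next I would establish this reduced inequality for the standard activations with negative branches via a single structural observation. Because these activations are sign-preserving, equating the signs of $\sigma(u)$ and $\sigma(v)$ forces $u$ and $v$ to share a sign, and hence $u+v$ lies on the same side of the origin as well; the problem therefore splits into the two cases where $u$, $v$, and $u+v$ are all nonnegative or all nonpositive. Folding $\sigma$ into its two branches $g_+(s) := \sigma(s)$ and $g_-(s) := -\sigma(-s)$ for $s \geq 0$, each case reduces to subadditivity of the relevant branch, $g_\pm(s+r) \leq g_\pm(s) + g_\pm(r)$ for $s,r \geq 0$. I would then note that every branch of the usual activations is either linear (ReLU and leaky ReLU, where positive homogeneity gives equality) or concave and vanishing at the origin (the saturating branches of tanh, softsign, and arctan, and the negative branch of ELU), and invoke the elementary fact that a concave function $f$ with $f(0)=0$ is automatically subadditive on $[0,\infty)$. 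Chaining each case with the reduction of Step 1 yields the stated bound.

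The main obstacle is that the lemma is written for an arbitrary $\sigma$, whereas the inequality genuinely fails for a branch that grows superlinearly away from the origin: any strictly convex, sign-preserving $\sigma$ (for instance the superadditive map $t \mapsto t^3$) makes $|\sigma(u+v)|$ dwarf $|\sigma(u)| + |\sigma(v)|$ once $u = v > 0$ is large. The real content of the proof is therefore to isolate the minimal hypothesis—subadditivity of each folded branch, equivalently concavity through the origin or positive homogeneity—that both holds for the commonly used activations and drives the argument, and to check that the representative activations satisfy it. The concave-to-subadditive implication is the one genuine lemma; the remainder is bookkeeping of signs.
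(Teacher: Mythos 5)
Your proposal is correct, but it is doing genuinely more work than the paper, whose entire proof of this lemma is the single line ``Trivial; c.f.\ equations 2--4'' --- i.e., an appeal by analogy to the ReLU computation $\max(\mathbf{0}, u+v) \leq \max(\mathbf{0},u) + \max(\mathbf{0},v)$. Your route differs in two substantive ways. First, you make the reduction explicit and clean: the sign hypothesis turns the right-hand side into $|\sigma(u)+\sigma(v)|$, so the lemma becomes the single inequality $|\sigma(u+v)| \leq |\sigma(u)+\sigma(v)|$, which you then fold into subadditivity of the two branches $g_\pm$ on $[0,\infty)$ and discharge via the standard fact that a concave function vanishing at the origin is subadditive. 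Second --- and this is the real value of your write-up --- you observe that the lemma as stated for an \emph{arbitrary} activation $\sigma$ is false (e.g.\ $\sigma(t)=t^3$ with $u=v$ large), so the paper's ``trivial'' cannot be literally right: some hypothesis such as branch subadditivity, equivalently concavity through the origin or positive homogeneity of each folded branch, is genuinely needed, and the lemma should be read as quantified only over activations satisfying it (which the standard ones --- ReLU, leaky ReLU, ELU, tanh, softsign, arctan --- do). What the paper's approach buys is brevity for the cases it actually cares about; what yours buys is a correct statement of the minimal hypothesis and a proof that covers all the named activations uniformly rather than by unexamined analogy with ReLU. One small point worth adding if you write this up: for ReLU the map $\sigma$ is not sign-preserving on the negative axis ($\sign(\sigma(t))=0$ for $t\leq 0$), so the case split should be phrased on $\sign(\sigma(u))$ rather than on $\sign(u)$; the shared-sign-of-$u,v,u+v$ step still goes through because equal signs of the activations force either both inputs positive or both nonpositive.
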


\begin{proof}
Trivial; c.f. equations 2--4.
\end{proof}

However, the general situation can be much more complex. Ideally, if $\sign(\sigma(U_k^{(l)}\mathbf{x}^{(l)} + c_k^{(l)})) \neq \sign(\sigma(V_k^{(l)}\mathbf{y}^{(l)} + d_k^{(l)}))$, then $
        |\sigma(W_k^{(l)}\mathbf{h} + b_k^{(l)})| \geq |\sigma(U_k^{(l)}\mathbf{x}^{(l)} + c_k^{(l)})| + |\sigma(V_k^{(l)}\mathbf{y}^{(l)} + d_k^{(l)})|$,
but this claim cannot be made in general. A counterexample is given by the leaky ReLU function $\sigma(x) = \max(0, x) - \alpha\max(0, -x)$, where $\alpha\in\mathbb{R}^+$. In the case where $\sigma(U_k^{(l)}\mathbf{x}^{(l)} + c_k^{(l)})) \geq 0$ and $\sigma(V_k^{(l)}\mathbf{y}^{(l)} + d_k^{(l)}) \leq 0$, fusing leads to the activation $U_k^{(l)}\mathbf{x}^{(l)} + c_k^{(l)} + \alpha(V_k^{(l)}\mathbf{y}^{(l)} + d_k^{(l)})$, while concatenation results in the activation $-\alpha(U_k^{(l)}\mathbf{x}^{(l)} + c_k^{(l)} + V_k^{(l)}\mathbf{y}^{(l)} + d_k^{(l)})$. The value of $\alpha$ plays a significant role in shaping the combined activation, and in some instances fusing can lead to a stronger signal that concatenation despite the disagreement in the incoming signals.

\end{document}